\def\R{\mathbb{R}}
\def\1{\mathds{1}}
\def\card{\mathrm{card}}
\def\tr{\mathrm{tr}}
\def\span{\mathrm{span}}
\newcommand\il[1]{\langle #1 \rangle}
\title[]{Pointed subspace approach to incomplete data}
\begin{document}

\maketitle

\begin{abstract}
	Incomplete data are often represented as vectors with filled missing attributes joined with flag vectors indicating missing components. In this paper we generalize this approach and represent incomplete data as pointed affine subspaces. This allows to perform various affine transformations of data, as whitening or dimensionality reduction. We embed such generalized missing data into a vector space by mapping pointed affine subspace (generalized missing data point) to a vector containing imputed values joined with a corresponding projection matrix. Such an operation preserves the scalar product of the embedding defined for flag vectors and allows to input transformed incomplete data to typical classification methods.
\end{abstract}

\begin{keywords}
incomplete data, SVM, linear transformations
\end{keywords}

\section{Introduction} \label{se:introduction}

Incomplete data analysis is an important part of data engineering and machine learning, since it appears in many practical problems. In medical diagnosis, a doctor may be unable to complete the patient examination due to the deterioration of health status or lack of patient's compliance \citep{burke1997compliance}; in object detection, the system has to recognize the shape from low resolution or corrupted images \citep{berg2005shape}; in chemistry, the complete analysis of compounds requires high financial costs \citep{stahura2004virtual}. In consequence, the understanding and the appropriate representation of such data is of great practical  importance.

A missing data is typically viewed as a pair $(x,J_x)$, where $x \in \R^N$ is a vector with missing components $J_x \subset \{1,\ldots,N\}$. In the most straightforward approach, one can fill missing attributes with some statistic, e.g. mean, taken from existing data. Although such a strategy can be partially justified when the features are missing at random, we lose the knowledge about unknown attributes\footnote{In the medical data, typically some component is missing if the state of the patient is so bad, that a given numerical procedure cannot be performed. Consequently, the knowledge that given component is missing could say a lot about the state of the patient.}. To preserve this information we usually add a flag indicating which components were missing. More precisely, we supply $x$ with a binary vector $\1_{J_x}$, in which 1 denotes absent feature while 0 means the present one.

\begin{figure}[t]
	\centering
	\begin{tikzpicture}
	\node(bit 1) at (0,4.5){\bf Input:};
	\node(bit 1) at (0,4){$(x_1,?)$};
	\node(bit 1) at (0,3.5){$(?,y_2)$};
	
	\draw[
	-triangle 90,
	line width=0.02mm,
	postaction={draw, line width=0.01cm, shorten >=0.1cm, -}
	] (3,1) -- (3,5);
	\draw[
	-triangle 90,
	line width=0.02mm,
	postaction={draw, line width=0.01cm, shorten >=0.1cm, -}
	] (1,2.7) -- (5,2.7);
	\draw[
	dashed,
	line width=0.02mm,
	postaction={draw, line width=0.01cm, shorten >=0.1cm, -}
	] (1,3.5) -- (5,3.5);
	\node(a) at (2,3.47) {\textbullet};
	\node(a) at (2.5,3.8){\tiny $(y_1,y_2) + \span\{(1,0)\}$};
	\draw[
	dashed,
	line width=0.02mm,
	postaction={draw, line width=0.01cm, shorten >=0.1cm, -}
	] (4,1) -- (4,5);
	\node(a) at (4,1.5) {\textbullet};
	\node(a) at (3.2,1.8){\tiny $(y_1,y_2) + \span\{(0,1)\}$};
	
	\node(bit 1) at (5.8,2.8){$At + b$};
	\draw[
	-triangle 90,
	line width=0.02mm,
	postaction={draw, line width=0.01cm, shorten >=0.1cm, -}
	] (5.3,2.5) -- (6.3,2.5);

	\draw[
	-triangle 90,
	line width=0.02mm,
	postaction={draw, line width=0.01cm, shorten >=0.1cm, -}
	] (8.5,1) -- (8.5,5);
	\draw[
	-triangle 90,
	line width=0.02mm,
	postaction={draw, line width=0.01cm, shorten >=0.1cm, -}
	] (6.5,2.7) -- (10.5,2.7);
	\draw[
	dashed,
	line width=0.02mm,
	postaction={draw, line width=0.01cm, shorten >=0.1cm, -}
	] (6.5,1) -- (10.5,3);
	\node(a) at (7.5,1.47) {\textbullet};
	\node(a) at (7.5,1.8){\tiny $A y + b + \span\{(0,1)\}$};
	\draw[decorate,decoration={brace}]  (6.3,2) -- node[above] {\tiny $w$} (7.1,2);
	\draw[decorate,decoration={brace}]  (7.4,2) -- node[above] {\tiny $W$} (8.6,2);
	\draw[
	dashed,
	line width=0.02mm,
	postaction={draw, line width=0.01cm, shorten >=0.1cm, -}
	] (6.5,4.5) -- (10.5,2);
	\node(a) at (8,3.53) {\textbullet};
	\node(a) at (8.5,3.8){\tiny $A x + b + \span\{(1,0)\}$};
	\draw[decorate,decoration={brace}]  (7.3,4) -- node[above] {\tiny $v$} (8.1,4);
	\draw[decorate,decoration={brace}]  (8.45,4) -- node[above] {\tiny $V$} (9.6,4);
	
	\node(bit 1) at (11.6,4.5){\bf Output:};
	\node(bit 1) at (11.6,4){$v+ V$};
	\draw[
	-triangle 90,
	line width=0.02mm,
	postaction={draw, line width=0.01cm, shorten >=0.1cm, -}
	] (12.1,4) -- (12.5,4);
	\node(bit 1) at (13.1,4){$(v,p_V)$};
	\node(bit 1) at (11.6,3.5){$w + W$};
	\draw[
	-triangle 90,
	line width=0.02mm,
	postaction={draw, line width=0.01cm, shorten >=0.1cm, -}
	] (12.1,3.5) -- (12.5,3.5);
	\node(bit 1) at (13.1,3.5){$(w,p_W)$};
	
	\end{tikzpicture}
	
	\caption{Representation of incomplete data as pointed subspaces, their affine transformation and final embedding as projections onto subspaces.}
\end{figure}
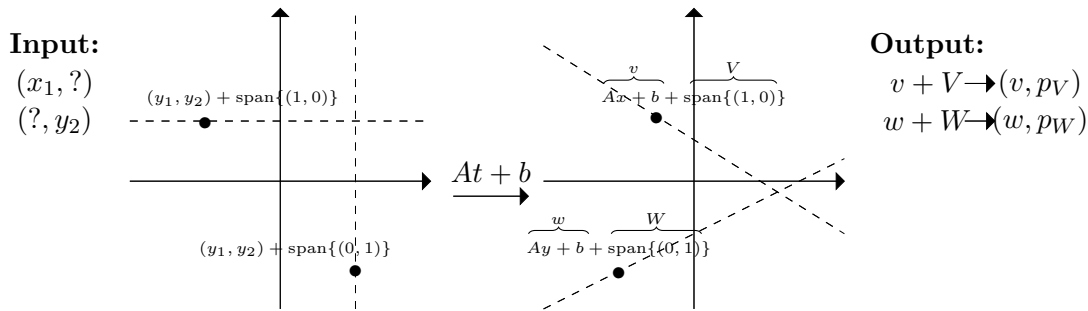

Summarizing, we perform the embedding $(x,J_x) \to (x,\1_{J_x})$ of missing points into a vector space of extended complete data. This allows us to apply typical classification tools, like SVM, with the scalar product defined by
\begin{equation} \label{eq:standard}
	\il{(x,J_x),(y,K_y)}=\il{x,y}+\il{\1_{J_x},\1_{K_y}}.
\end{equation}
In practical classification problems we usually perform various affine transformations of data, as whitening or dimensionality reduction, before training a classifier. Moreover, we may know that the data satisfy some affine constraint. It is nontrivial how to modify the flag vectors so as to keep the correspondence with such affine transformations. Thus, our main problem behind the paper can be stated as follows: \emph{How to transform the flag vectors indicating the missing components if we perform the linear (or affine) mapping of data?}

In this contribution, we show that the answer can be given by viewing the incomplete data as pointed affine subspaces, i.e. the subspace with a distinguished point called basepoint. We first observe that a pair $(x,J_x)$ can be formally associated with a pointed affine subspace of $\R^N$:
$$
x+\span(e_j)_{j \in J_x},
$$
where $(e_j)_{j =1}^N$ denotes the canonical base of $\R^N$ and $x$ is a selected basepoint. In other words, this is the set of all points which coincide with the representative $x$ on the coordinates different from $J_x$. In consequence, by a {\em generalized missing data point} in $\R^N$ we understand a pointed affine subspace $S_x = x+V$ of $\R^N$, where $x \in \R^N$ is a basepoint and $V = S_x - x$ is a linear subspace. Since the basepoint can be selected with a use of various imputation techniques, we propose to choose the most probable point of $S_x$, i.e. to project a dataset mean onto $S_x$ with respect to Mahalanobis scalar product given by the covariance of data.

Such a definition allows us to efficiently extend linear and affine operations from the standard points to missing ones, by taking the image of the subspace and the point. For example, a linear mapping $F:w \to Aw+b$, can be extended to the case of pointed subspace $x+V$ by 
$$
F(x+V)=F(x)+AV.
$$
Given an affine constraint $W$, we restrict\footnote{Observe that if such a constraint $W$ is given the augmentation of the missing components must be performed in such a way as to choose the representation in $W$, and consequently we may assume that $x \in W$.} $x+V$
by the formula $(x+V) \cap W=x+(V \cap (W-x))$.

There appears another question: how to work with such data, and in particular how to embed the generalized missing data into a vector space in such a way to respect the scalar product \eqref{eq:standard} given by the flag embedding? Our main observation shows that this can be achieved by identifying a linear subspace $V$ with an orthogonal projection $p_V: \R^N \to V$ by considering the embedding $(x,V) \to (x,p_V) \in \R^N \times \R^{N \times N}$. We show that the scalar product of embeddings coincides with \eqref{eq:standard}, i.e.
$$
\il{(x,\1_{J_x}),(y,\1_{K_y})}=\il{(x,p_{\span(e_J:j \in J_x)}),(y,p_{\span(e_k:k \in K_y)})}.
$$

The paper is organized as follows. The next section covers the related approaches to incomplete data analysis. In third section, we define the generalized missing data, present a strategy of embedding such data into a vector space and propose a new imputation method. We also define a scalar product for such embeddings and show its connections with existing flag approach. In fourth section, we illustrate our method with sample classification results.

\section{Related works} \label{se:related}

The most common approach to learning from incomplete data is known as deterministic imputation \citep{mcknight2007missing}. In this two-step procedure, the missing features are filled first, and only then a standard classifier is applied to the complete data \citep{little2014statistical}. Although the imputation-based techniques are easy to use for practitioners, they lead to the loss of information which features were missing and do not take into account the reasons of missingness. To preserve the information of missing attributes, one can use an additional vector of binary flags, which was discussed in the introduction.

The second popular group of methods aims at building a probabilistic model of incomplete data which maximizes the likelihood by applying the EM algorithm \citep{ghahramani1994supervised, schafer1997analysis}. This allows to generate the most probable values from obtained probability distribution for missing attributes (random imputation) \citep{mcknight2007missing} or to learn a decision function directly based on the distributional model. The second option was already investigated in the case of linear regression \citep{williams2005incomplete}, kernel methods \citep{smola2005kernel, williams2005analytical} or by using second order cone programming \citep{shivaswamy2006second}. One can also estimate the parameters of the probability model and the classifier jointly, which was considered in \citep{dick2008learning, liao2007quadratically}. This techniques work very well when the missing data is conditionally independent of the unobserved features given the observations, but there is no guarantee to get a reasonable estimation in more general missing not at random case.

There is also a group of methods, which does not make any assumptions about the missing data model and makes a prediction from incomplete data directly. In  \citep{chechik2008max} a modified SVM classifier is trained by scaling the margin according to observed features only. The alternative approaches to learning a linear classifier, which avoid features deletion or imputation, are presented in \citep{dekel2010learning, globerson2006nightmare}. Finally, in \citep{grangier2010feature} the embedding mapping of feature-value pairs is constructed together with a classification objective function.

In our contribution, we generalize the imputation-based techniques in such a way to preserve the information of missing features. To select a basepoint we propose to choose the most probable point form a subspace identifying a missing data point, however other imputation methods can be used as well. Constructed representation allows to apply various affine data transformations preserving classical scalar product before applying typical classification methods.



\section{Generalized incomplete data}

In this section, we introduce the subspace approach to incomplete data. First, we define a generalized missing data point, which allows to perform affine transformation of incomplete data. Then, we show how to embed generalized missing data into a vector space and select a basepoint. Finally, we define a scalar product on the embedding space.

\subsection{Incomplete data as pointed affine subspaces}

Incomplete data $X$ can be understood as a sequence of pairs $(x_i,J_i)$, where $x_i \in \R^N$ and $J_i \subset \{1,\ldots,N\}$ indicates missing coordinates of $x_i$. Therefore, we can associate a missing data point $(x,J)$ with an affine subspace $x + \span(e_j)_{j \in J}$, where $(e_j)_j$ is the canonical base of $\R^N$. Let us observe that $x+ \span(e_j)_{j \in J}$ is a set of all $N$-dimensional vectors which coincide with $x$ on the coordinates different from $J$. 

In this paper, we focus on transforming incomplete data by affine mappings. For this purpose, we generalize the above representation to arbitrary affine subspaces, or more precisely pointed affine subspaces, which do not have to be generated by canonical bases.
\begin{definition}
	A generalized missing data point is defined as a pointed affine subspace $S_x = x + V$, where $x \in \R^N$ is a basepoint and $V = S_x - x$ is a linear subspace of $\R^N$.
\end{definition}
A basepoint can be selected by filling missing attributes with a use of any imputation method, which will be discussed in the next subsection.

\begin{remark} \label{rem:pointedAff}
Observe that the notion of pointed affine subspace differs from classical affine subspace. In particular, pointed subspace depends on the selection of basepoint. In consequence, we can create two different generalized missing data points $S_y, S_z$ from the same missing data point $(x,J)$ by using different imputation methods. 
\end{remark}

First, we show that the above definition is useful for defining linear mappings on incomplete data. Let $S_x = x + V$ be a generalized missing data point and let $f:\R^N \ni w \to Aw+b$ be an affine map. We can transform a generalized missing data point $x+V$ into another missing data point by the formula:
$$
f(x+V)=\{Aw+b:w \in x+V\}.
$$
The basepoint $x$ is mapped into $Ax+b$, while the linear part of $f(x+V)$ is given by 
$$
f(x+V)-f(x)=AV.
$$
Consequently, we arrive at the definition:
\begin{definition} \label{def:affine}
	For a a generalized missing data point $S_x = x+V$ and an affine mapping $f:w \to Aw+b$ we put:
	$$
	f(x+V)=(Ax+b)+AV,
	$$
	where $Ax+b$ is a basepoint and $AV$ is a linear subspace.
\end{definition}
One can easily compute and represent $AV$, if the orthonormal base $v_1,\ldots,v_n$ of $V$ is given, namely we simply orthonormalize the sequence $Av_1,\ldots,Av_n$.

\subsection{Embedding of generalized missing data} \label{sec:imput}

The above representation is useful for understanding and performing affine transformations of incomplete data, such as whitening, dimensionality reduction or incorporating affine constraints to data. Nevertheless, typical machine learning methods require vectors or a kind of kernel (or similarity) matrix as the input. We show how to embed generalized missing data into a vector space.

A generalized missing data point $S_x = x+V$ consists of a basepoint $x \in \R^N$ which is an element of vector space and a linear subspace $V$. To represent a subspace $V$, we propose to use a matrix of orthogonal projection $p_V$ onto $V$. To get an exact form of $p_V$, let us assume that $(v_j)_{j \in J}$ is an orthonormal base of $V$. Then, the projection of $y \in \R^N$ can be calculated by
$$
p_V(y)=\sum_{j \in J} \il{y,v_j}v_j=\sum_{j \in J} v_j v_j^T y=(\sum_{j \in J} v_jv_j^T)y,
$$
which implies that 
$$
p_V=\sum_{j \in J}v_jv_J^T. 
$$

The selection of basepoint relies on filling missing attributes with some concrete values, which is commonly known as imputation. In our setting, by the imputation we denote a function $\Phi: X \to \R^N$ such that
$$
\Phi(S_x) \in S_x,
$$
for a generalized missing data $S_x$. 

In the case of classical incomplete data, missing attributes are often filled with a mean or a median calculated from existing values for a given attribute. However, these imputations cannot be easily defined in a general case, because the linear part of generalized missing data point might be an arbitrary linear subspace (not necessarily a subspace generated by a subset of canonical base). Let us observe that another popular imputation method, which fills the  missing coordinates with zeros can be defined for generalized incomplete data. This is performed by selecting a basepoint of an incomplete data point $S_x = x+V$ as the orthogonal projection of missing data $x$ onto the subspace orthogonal to $V$, i.e.:
$$
x_{V^{\perp}}=x-p_V (x)=x-\sum_{j \in J} \il{x_j,v_j}v_j,
$$ 
where $(v_j)_{j \in J}$ is an arbitrary orthonormal base of $V$. If $V$ is represented by canonical base then this is equivalent to filling missing attributes with zeros.

We propose another technique for setting missing values, which extends zero imputation method. Let us assume that $(m,\Sigma)$ are the mean and covariance matrix estimated for incomplete dataset $X$. In this method, a basepoint of $x+V$ is selected as the orthogonal projection of $m$ onto $x+V$ with respect to the Mahalanobis scalar product parametrized by $\Sigma$, i.e. 
$$
x_V^{(m,\Sigma)} = x+p_V^{\Sigma}(m-x),
$$
where $p_V^{\Sigma}$ denotes a projection matrix onto $V$ with respect to Mahalanobis scalar product given by $\Sigma$. To obtain the values for $m$ and $\Sigma$ in practice, one can use existing attributes of incomplete data for the calculation of a sample mean and a covariance matrix. Alternatively, if data satisfy missing at random assumption, then the EM algorithm can be applied to estimate the probability model describing data \citep{schafer1997analysis}. We call this technique by \emph{the most probable point imputation.} 

Summarizing, our embedding is defined as follows:
\begin{definition}
	A generalized missing data point is embedded in a vector space by
	$$
	S_x \to (x,p_V) \in \R^N \times \R^{N \times N},
	$$
	where $S_x = x+V$ and $x$ is a basepoint.
\end{definition}

\begin{example}
	To illustrate the effect of missing data imputation and transformation, let us consider the whitening operation:
	$$
	\mathrm{Whitening}(x) = \Sigma^{-1/2}(x-m),
	$$
	where $\Sigma$ is the covariance, and $m$ the mean of $X$. For a generalized missing data the above operation is defined by:
	$$
	\mathrm{Whitening}(x+V) = \Sigma^{-1/2}(x-m) + \Sigma^{-1/2} V.
	$$
	In other words, we map a basepoint in a classical way and transform a subspace $V$ into a linear subspace $\Sigma^{-1/2} V$. The illustration is given in Figure \ref{fig:whiten}.
\end{example}

\begin{example}
	In the case of high dimensional data, we sometimes reduce a dimension of input data space by applying the Principle Component Analysis, which is defined by:
	$$
	\mathrm{PCA}(x) = W^T (x-m),
	$$
	where $m$ is a mean of a dataset and $k$ columns of $W$ are the leading eigenvectors of covariance matrix $\Sigma$. This operation can be extended to the case of generalized missing data by:
	$$
	\mathrm{PCA}(x+V) = W^T (x-m) + W^T V.
	$$
	An example of the above operation is illustrated in the Figure \ref{fig:pca}.
	
\end{example}

\begin{figure*}[t]
	\centering
	\subfigure[Zero imputation.]{\includegraphics[height=0.92in]{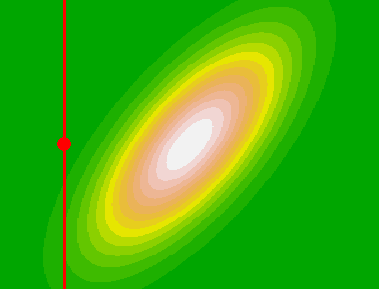}  \label{cl}} \quad
	\subfigure[Whitening for zero imputation.]{\includegraphics[height=0.92in]{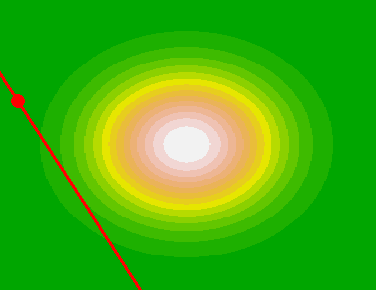} \label{cl1}}\quad
	\subfigure[Most probable point imputation.]{\includegraphics[height=0.92in]{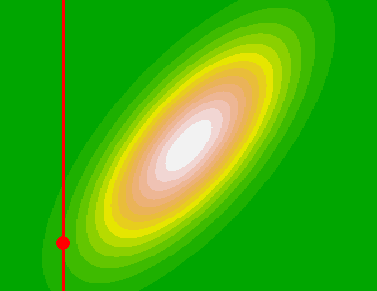} \label{pr}} \quad
	\subfigure[Whitening for most probable point.]{\includegraphics[height=0.92in]{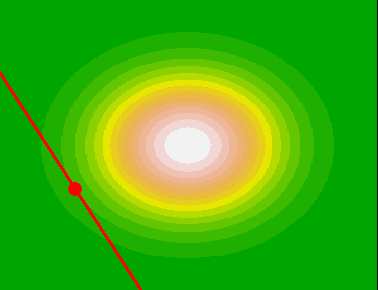} \label{pr1}}
	\caption{Whitening of data with a single element containing one missing attribute. Missing feature was filled with zero (\ref{cl}, \ref{cl1}) or most probable point imputation (\ref{pr}, \ref{pr1}).}
	\label{fig:whiten}
\end{figure*}

\begin{figure*}[t]
	\centering
	\subfigure[Image.]{\includegraphics[height=1in]{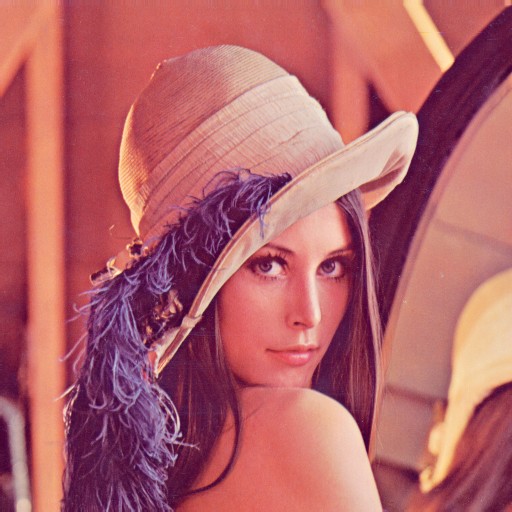} \label{im}}
	\subfigure[2D projection.]{\includegraphics[height=1in]{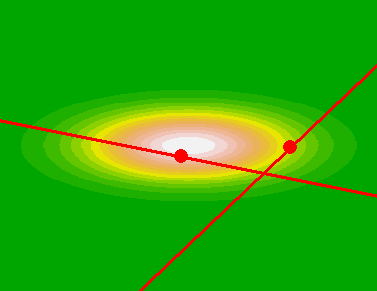} \label{pca}}
	\caption{The image \ref{im} with two missing pixels and its projection onto two principal components \ref{pca}. Image was represented by the feature vectors consisting of 8x8 blocks. Missing pixels are identified by the pointed subspaces with basepoints chosen by zero imputation strategy.}
	\label{fig:pca}
\end{figure*}

\subsection{Scalar product for SVM kernel}

To apply most of classification methods it is necessary to define a scalar product (kernel matrix) on a data space. As a natural choice, one could sum the scalar products between basepoints and embedding matrices, i.e.
\begin{equation}\label{std:product}
	\il{x+V,y+W}=\il{x,y}+\il{p_V,p_W}.
\end{equation}
However, for a data space of dimension $N$, we have $\|p_V\|^2=N$, which implies that the weight of projection can dominate the first part of \eqref{std:product} concerning basepoints. Consequently, we decided to introduce an additional parameter to allow reducing the importance of projection part:
\begin{definition}\label{product}
	Let $D \in [0,1]$ be fixed. As a scalar product between two generalized missing data points we put:
	\begin{equation} \label{eq:productt}
	\il{x+V,y+W}_D=\il{x,y}+D \il{p_V,p_W}.
	\end{equation}
\end{definition}
Let us observe that the above parametric scalar product can be implemented by taking the embedding $x+V \to (x,\sqrt{D}p_V)$ and then using formula \eqref{std:product} for a scalar product.

\begin{remark}
Observe that the value of function \eqref{eq:productt} strictly depends on the selection of basepoints, which makes it not well defined scalar product in the space of classical affine subspaces. Indeed, $x+V$ defines the same affine subspace as $x+v+V$, where $v \in V$, but such shifts may lead to different values of the right hand side of \eqref{eq:productt}. However, this is well defined scalar product in the case of pointed affine subspaces, because two different selections of basepoints give different pointed affine subspaces (see Remark \ref{rem:pointedAff}). In consequence, it might be safely used in the case of generalized missing data points considered in this paper.
\end{remark}

The following proposition shows how to calculate a scalar product between matrices defining two orthogonal projections onto linear subspaces.
\begin{proposition} \label{prop}
	Let us consider subspaces 
	$$
	V=\span(v_j: j \in J), W=\span(w_j:j \in K).
	$$
	where $v_j$ and $w_k$ are orthonormal sequences. If $p_V,p_W$ denote orthogonal projections onto $V,W$, respectively, then 
	$$
	\il{p_V,p_W}=\sum_{j \in J,k \in K} \il{v_j,w_k}^2.
	$$
\end{proposition}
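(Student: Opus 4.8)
The plan is to compute the matrix scalar product $\il{p_V,p_W}$ directly, interpreting $\il{\cdot,\cdot}$ on $\R^{N\times N}$ as the Frobenius (trace) inner product $\il{A,B}=\tr(A^TB)=\sum_{i,l}A_{il}B_{il}$; this is the product under which the proposed embedding $S_x\to(x,p_V)$ recovers the flag scalar product \eqref{eq:standard}. I would start from the rank-one expansions already derived in the text, namely $p_V=\sum_{j\in J}v_jv_j^T$ for the orthonormal base $(v_j)_{j\in J}$ of $V$, and likewise $p_W=\sum_{k\in K}w_kw_k^T$. Since orthogonal projections are symmetric, $p_V^T=p_V$, so the Frobenius product simplifies to $\il{p_V,p_W}=\tr(p_Vp_W)$.

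First I would multiply the two expansions and pull the scalars outside, using that $v_j^Tw_k=\il{v_j,w_k}$:
$$
p_Vp_W=\Big(\sum_{j\in J}v_jv_j^T\Big)\Big(\sum_{k\in K}w_kw_k^T\Big)=\sum_{j\in J,\,k\in K}\il{v_j,w_k}\,v_jw_k^T.
$$
Next I would take the trace term by term. The one elementary identity needed is that $\tr(ab^T)=b^Ta=\il{a,b}$ for any $a,b\in\R^N$, which follows from the cyclic property of the trace (or by direct computation). Applying it with $a=v_j$ and $b=w_k$ gives $\tr(v_jw_k^T)=\il{v_j,w_k}$, and therefore
$$
\il{p_V,p_W}=\tr(p_Vp_W)=\sum_{j\in J,\,k\in K}\il{v_j,w_k}\,\tr(v_jw_k^T)=\sum_{j\in J,\,k\in K}\il{v_j,w_k}^2,
$$
which is exactly the claimed formula.

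I do not expect a genuine obstacle here, as the argument is a short linear-algebra computation; the only points requiring care are fixing the correct inner product on matrices (the trace product) and reducing the trace of the rank-one blocks via $\tr(ab^T)=\il{a,b}$. As a sanity check that doubles as an alternative route, one may write $\tr(p_Vp_W)=\sum_i\il{p_Ve_i,p_We_i}$ over the canonical basis $(e_i)$ using the symmetry of $p_V$; substituting the expansions of $p_V,p_W$ and expanding the inner products again collapses the sum to $\sum_{j,k}\il{v_j,w_k}^2$, so either approach closes the proof.
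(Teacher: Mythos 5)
Your proof is correct and follows essentially the same route as the paper's: both expand $p_V=\sum_{j}v_jv_j^T$ and $p_W=\sum_{k}w_kw_k^T$ into rank-one terms and use the cyclic property of the trace under the Frobenius product to reduce each term to $\il{v_j,w_k}^2$. The only cosmetic difference is that you multiply the expansions first and then trace $v_jw_k^T$, whereas the paper traces each product $(v_jv_j^T)^T(w_kw_k^T)$ directly; the underlying computation is identical.
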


\begin{proof}
	By the definition of orthogonal projections and the scalar product between matrices, we have
	\begin{equation} \label{eq:trace}
		\il{p_V,p_W}=\sum_{j \in J,k \in K}\tr((v_jv_j^T)^T(w_kw_k^T)).
	\end{equation}
	Making use of $\tr(AB)=\tr(BA)$, we get
	$$
	\tr((v_jv_j^T)^T(w_kw_k^T))=\tr(v_jv_j^Tw_kw_k^T)=
	\tr(v_j^Tw_kw_k^Tv_j)=(v_j^Tw_k) \cdot (w_k^Tv_j)=\il{v_j,w_k}^2.
	$$
	Finally, 
	$$
	\il{p_V,p_W}=\sum_{j \in J,k \in K}\il{v_j,w_k}^2.
	$$
\end{proof}

Concluding, the scalar product between embedding of two generalized missing data points given by Definition \ref{product} can be calculated as:
$$
\il{x+V,y+W}_D=\il{x,y}+D\sum_{i,j}(p_V)_{ij}(p_W)_{ij} =\il{x,y} + D\sum_{j \in J,k \in K}\il{v_j,w_k}^2,
$$
where $(v_j)_{j \in J},(w_k)_{k \in K}$ are orthonormal bases of $V,W$, respectively. The last expression can be more numerically efficient if the dimension of the subspaces (the number of missing attributes) is much smaller than the dimension of the whole space.

\begin{remark} One of typical representations of missing data $(x,J)$ relies on filling unknown attributes and supplying it with a binary flag vector $\1_J \in \R^N$, in which bit $1$ denotes coordinate belonging to $J$. This leads to the embedding of the missing data into a vector space given by  
	$$
	(x,J) \to (x,\1_J) \in \R^N \times \R^N.
	$$
	Then, the scalar product of such embedding can be defined by
	\begin{equation} \label{eq:scalar}
		\il{(x,\1_J,)(y,\1_K)}=\il{x,y}+\il{\1_J,\1_K}=\il{x,y}+\card(J \cap K).
	\end{equation}
	
	It is worth to noting that the formula \eqref{eq:scalar} coincides with a scalar product defined for generalized missing data \eqref{std:product} (for $D=1$). Indeed, if $V=\span(e_j:j \in J)$ and 
	$W=\span(e_k:k \in K)$, for  $J,K \subset \{1,\ldots,N\}$, then by Proposition \ref{prop} we have,
	$$
	\il{p_V,p_W}=\sum_{j \in J,k \in K}\il{e_j,e_k}^2=\sum_{l \in J \cap K}\il{e_l,e_l}^2=\sum_{l \in J \cap K}1=\card (J \cap K),
	$$
	which is exactly the RHS of \eqref{eq:scalar}. 
	
	Therefore, our approach generalizes and theoretically justifies the flag approach to missing data analysis. The importance of our construction lies in its generality, which in particular allows for performing typical affine transformations of data. In the case of flag representation, there is no obvious solution how to perform such mappings on flag vector.
\end{remark}

\section{Experiments} \label{se:experiments}

To illustrate our approach we applied it in SVM classification experiments, which assumed the use of whitening operation before performing a classification phase. We used examples retrieved from UCI repository combined with two strategies for attributes removal: random and structural. Finally, one real medical dataset was employed, which simulates a real process of missing features.

\begin{table}[t]
	\caption{Mean accuracies for a classification of UCI data sets with randomly missing attributes.}
	\label{tab:data1}
	\begin{center}
		\begin{tabular}{r|ccccc}
			\multicolumn{1}{c|}{\bf data} & \multicolumn{1}{c}{\bf embedding} &\multicolumn{1}{c}{\bf zero} &\multicolumn{1}{c}{\bf mean} &\multicolumn{1}{c}{\bf median} &\multicolumn{1}{c}{\bf most probable}
			\\ \hline \\
			\multirow{2}{*}{\rotatebox{90}{BC}} & no information & $0.71 \pm 0.02$ & $0.71 \pm 0.04$ & $0.73 \pm 0.04$ & $0.76 \pm 0.02$ \\
			& subspace & $0.73 \pm 0.03$ & $0.73 \pm 0.05$ & $0.74 \pm 0.04$ & $0.77 \pm 0.02$ \\ \hline \\
			
			\multirow{2}{*}{\rotatebox{90}{IS}} & no information & $0.63 \pm 0.02$ & $0.67 \pm 0.02$ & $0.67 \pm 0.02$ & $0.67 \pm 0.03$ \\
			& subspace & $0.65 \pm 0.02$ & $0.67 \pm 0.02$ & $0.67 \pm 0.03$ & $0.67 \pm 0.02$ \\ \hline \\
			
			\multirow{2}{*}{\rotatebox{90}{Y}} & no information & $0.49 \pm 0.02$ & $0.52 \pm 0.01$ & $0.52 \pm 0.01$ & $0.52 \pm 0.01$ \\
			& subspace & $0.5 \pm 0.02$ & $0.52 \pm 0.01$ & $0.52 \pm 0.02$ & $0.53 \pm 0.01$ 
		\end{tabular}
	\end{center}	
\end{table}

For all cases, the following procedure was applied. First, we set missing features with a use of one of four strategies mentioned in the paper:
\begin{enumerate}
\item {\bf Mean:} average value of the feature over training set. 
\item {\bf Median:} median of the feature over training set. 
\item {\bf Zero imputation:} missing features were filled with zeros. 
\item {\bf Most probable imputation:} it was described in section \ref{sec:imput}.
\end{enumerate}
For a simplicity the mean and covariance matrix were estimated from a training set with a use of {\bf norm} R package\footnote{Since the use of EM method implemented in {\bf norm} is justified in missing at random case, then one could also estimated a mean and covariance based on existing attributes.}. 

Next, we performed a whitening of dataset (making use of the parameters returned by {\bf norm}) based on two approaches:
\begin{enumerate}
\item {\bf No information:} Feature vectors with imputed missing attributes were whiten. 
\item {\bf Subspace:} Feature vectors with imputed values were joined with corresponding projection matrices and then the entire vectors were whiten according to the Definition \ref{def:affine}.
\end{enumerate}
The above scenarios represent classical imputation and our pointed affine subspace approach. We would like to investigate how the information preserved in the subspace influences the classification results.

Finally, we calculated the scalar products (kernel matrices) for such representations of data and  trained SVM classifier implemented in {\bf libsvm} \citep{chang2011libsvm}. Missing features of test set instances were filled and transformed  based on a training set only. 

All experiments assumed double 5-fold cross validation. More precisely, for every division into train and test sets, the required hyperparameters were tuned using inner 5-fold cross validation applied on training set. The combination of parameters maximizing mean accuracy score (on validation set) was used to learn a final classifier on a entire training set, while the performance was evaluated on a testing set that was not used during training. The accuracy was averaged over all 5 trails. 
We learned a standard margin parameter $C$ as well as a parameter $D$ in the formula of scalar product for subspace embedding. We performed a grid search in the following ranges: $C=\{10^k\, :\ k = -2,-1,0,1\}$ and $D=\{\frac{1}{2^k}\, :\ k = 0,1,\ldots,10\}$.

\subsection{UCI datasets}

We used three UCI datasets (for datasets with more than two classes we selected two the most numerous classes):  breast cancer (BC), ionosphere (IS) and yeast (Y) \citep{Asuncion+Newman:2007}. In the first case, we randomly removed $90\%$ of features. In the second option, we defined a structural process of attributes removal. More precisely, we drawn $N$ points $x_1,\ldots,x_N$ of a dataset $X \subset \R^N$. Then, for every $x \in X$ we removed its $i$-th attribute with a probability $\exp(-t\|x - x_i\|_{\Sigma}))$, where $\| x \|_{\Sigma}$ denotes the Mahalanobis norm of $x$ with respect to $\Sigma$ and $t > 0$ was chosen to remove approximately $90\%$ of attributes.

The results presented in Table \ref{tab:data1} show that there is no benefit from identifying absent attributes when the features were missing completely at random. One can observe that most probable point imputation usually provided the highest accuracy among the imputation strategies.

In the case of structurally missing features, Table \ref{tab:data2}, the proposed subspace approach gave better classification results for all datasets and for all cases of imputations. Moreover, the most probable point imputation outperformed other strategies of filling missing coordinates on two out of three datasets.

\begin{table}[t]
	\caption{Mean accuracies for a classification of UCI data sets with structural attribute absence.}
	\label{tab:data2}
	\begin{center}
		\begin{tabular}{r|ccccc}
			\multicolumn{1}{c|}{\bf data} & \multicolumn{1}{c}{\bf embedding} &\multicolumn{1}{c}{\bf zero} &\multicolumn{1}{c}{\bf mean} &\multicolumn{1}{c}{\bf median} &\multicolumn{1}{c}{\bf most probable}
			\\ \hline \\
			\multirow{2}{*}{\rotatebox{90}{BC}} & no information & $0.74 \pm 0.03$ & $0.73 \pm 0.03$ & $0.73 \pm 0.02$ & $0.76 \pm 0.02$ \\
			& subspace & $0.76 \pm 0.03$ & $0.76 \pm 0.02$ & $0.76 \pm 0.03$ & $0.78 \pm 0.02$ \\ \hline \\
			
			\multirow{2}{*}{\rotatebox{90}{IS}} & no information & $0.66 \pm 0.02$ & $0.67 \pm 0.03$ & $0.69 \pm 0.03$ & $0.69 \pm 0.03$ \\
			& subspace & $0.71 \pm 0.03$ & $0.70 \pm 0.04$ & $0.71 \pm 0.02$ & $0.72 \pm 0.02$ \\ \hline \\
			
			\multirow{2}{*}{\rotatebox{90}{Y}} & no information & $0.61 \pm 0.03$ & $0.52 \pm 0.01$ & $0.52 \pm 0.01$ & $0.52 \pm 0.01$ \\
			& subspace & $0.62 \pm 0.04$ & $0.56 \pm 0.02$ & $0.59 \pm 0.02$ & $0.56 \pm 0.02$ 
		\end{tabular}
	\end{center}	
\end{table}

\subsection{Medical data}

In this application we considered a real angiological dataset acquired from Jagiellonian Center of Experimental Therapeutic containing patients' examinations, \url{http://jcet.eu/new_en/}. The goal was to find patients with atherosclerosis. Innovative medical tests are very expensive, time-consuming and in some cases they cannot be successfully completed due to the patient's condition. In consequence, research database contains many empty cells, which is the effect of purely structural process. Since some of parameters are discrete as well as real valued numbers presented in different scales, then a whitening of data is a natural preprocessing step.

The results illustrated in Table \ref{tab:data3} partially confirm the hypothesis suggested in previous experiment. Indeed, the use of proposed subspace embedding, gave higher accuracy for all imputation strategies, but the benefit from its application was not significant. It is difficult to decide which imputation strategy was optimal because all of them provided comparable results.

\section{Conclusion}

The paper generalized the existing approach of identifying missing attributes with binary flags. To enable appropriate affine transformations of data, we represented incomplete data as pointed affine subspaces and embedded them into a vector space by linking a pointed subspace with a basepoint joined with a corresponding projection matrix. In the same spirit we proposed to select a basepoint as the most probable point from a subspace, which extends the well-known zero imputation strategy. Such a combination provided the best performance in conducted classification experiments in most cases.

\begin{table}[t]
	\caption{Mean accuracies for a classification of medical data.}
	\label{tab:data3}
	\begin{center}
		\begin{tabular}{ccccc}
			&\multicolumn{1}{c}{\bf zero} &\multicolumn{1}{c}{\bf mean} &\multicolumn{1}{c}{\bf median} &\multicolumn{1}{c}{\bf most probable}
			\\ \hline \\
			no information & $0.82 \pm 0.03$ & $0.81 \pm 0.02$ & $0.81 \pm 0.03$ & $0.81 \pm 0.02$ \\
			subspace & $0.82 \pm 0.01$ & $0.83 \pm 0.02$ & $0.83 \pm 0.02$ & $0.83 \pm 0.01$ \\
		\end{tabular}
	\end{center}	
\end{table}



\bibliography{./ref}

%
%
%
%

\end{document}